\documentclass{article}
\usepackage{style,times}


\usepackage{amsmath,amsfonts,bm,amsthm}
\usepackage{thm-restate}  
\usepackage[english]{babel}



\theoremstyle{plain}

\theoremstyle{definition}

\theoremstyle{remark}







\def\eqref#1{equation~\ref{#1}}









\def\1{\bm{1}}










\DeclareMathAlphabet{\mathsfit}{\encodingdefault}{\sfdefault}{m}{sl}
\SetMathAlphabet{\mathsfit}{bold}{\encodingdefault}{\sfdefault}{bx}{n}


\def\gF{{\mathcal{F}}}
\def\gG{{\mathcal{G}}}

\def\gL{{\mathcal{L}}}

\def\gN{{\mathcal{N}}}



\def\sW{{\mathbb{W}}}








\newcommand{\E}{\mathbb{E}}



\newcommand\numeq[1]%
  {\stackrel{\scriptscriptstyle(\mkern-1.5mu#1\mkern-1.5mu)}{=}}

\usepackage{hyperref}
\usepackage{url}
\usepackage{svg}
\usepackage{subcaption}
\usepackage{booktabs}
\usepackage{multirow}
\usepackage{physics}
\usepackage{siunitx}
\usepackage{algorithm}
\usepackage{algpseudocode}
\sisetup{output-exponent-marker=\ensuremath{\mathrm{e}}}
\usepackage{wrapfig}

\title{Three Forms of Stochastic Injection for 
\\ Improved Distribution-to-Distribution 
\\ Generative Modeling}

\author{
\textbf{Shiye Su}\textsuperscript{1}, 
\textbf{Yuhui Zhang}\textsuperscript{1}, 
\textbf{Linqi Zhou}\textsuperscript{1,2}, 
\textbf{Rajesh Ranganath}\textsuperscript{3}, 
\textbf{Serena Yeung-Levy}\textsuperscript{1} \\
\textsuperscript{1}Stanford University, \textsuperscript{2}Luma AI, \textsuperscript{3}New York University 
\\
\texttt{\{shiye,yuhuiz,linqizhou,syyeung\}@stanford.edu},
\texttt{rajeshr@cims.nyu.edu}
}

\finalcopy

\begin{document}

\maketitle

\begin{abstract}

Modeling transformations between arbitrary data distributions is a fundamental scientific challenge,
arising in applications like drug discovery and evolutionary simulation.
While flow matching offers a natural framework for this task, 
its use has thus far primarily focused on the noise-to-data setting, 
while its application in the general distribution-to-distribution setting is underexplored. 
We find that in the latter case, 
where the source is also a data distribution to be learned from limited samples,
standard flow matching fails due to sparse supervision.
To address this, we propose a simple and computationally efficient method that injects stochasticity into the training process by perturbing source samples and flow interpolants. 
On five diverse imaging tasks spanning biology, radiology, and astronomy, our method significantly improves generation quality, outperforming existing baselines by an average of 9 FID points. 
Our approach also reduces the transport cost between input and generated samples to better highlight the true effect of the transformation, making flow matching a more practical tool for simulating the diverse distribution transformations that arise in science.

\end{abstract}

\section{Introduction}

Modeling transformations between arbitrary distributions is a canonical problem in science. 
Consider drug discovery, where the challenge is to find compounds capable of transforming diseased cells into a healthy state.
This task is complicated by the inherent heterogeneity of cell populations, meaning the desired states must be treated as a distribution.
Furthermore, observational constraints like destructive assays yield unpaired `before' and `after' snapshots, making a one-to-one mapping impossible \citep{zhang2025cellflux,he2024squidiff}. 
This fundamental challenge — learning a distributional transformation from unpaired data — extends across scientific fields, from understanding disease progression in patients to tracing the evolution of galaxies over cosmological timescales \citep{he2025generative,anstine2023generative,wu2025flowdesign,hollmer2025open}.

Flow matching offers an elegant framework for distribution-to-distribution learning. 
Unlike diffusion models, which typically transport Gaussian noise into the data distribution, flow matching can directly model a transformation between two arbitrary empirical distributions \citep{lipman2022flow,liu2022flow}. 
Despite this theoretical promise, 
it has primarily been leveraged for learning noise-to-data.
We study the more general, scientifically relevant, yet underexplored data-to-data setting,
and diagnose a critical failure mode: sparse supervision.
With finite samples from \textit{both} source and target distributions,
training supervision is available only along one-dimensional interpolant trajectories that sparsely cover the sample space.
Consequently, the learned velocity field that overfits these few interpolations, leading to poor generalization. 
Our controlled experiments on a synthetic problem (Section~\ref{sec:sparsity}) confirm that the performance of flow matching drastically deteriorates with increasing data dimensionality or decreasing number of training data.

To counteract sparsity, we propose a simple and effective intervention: inject three forms of stochasticity during training to densify the supervision signal. 
First, we propose a two-stage training scheme inspired by transfer learning, initially training on the less data-sparse task of mapping noise to target before fine-tuning on the source-to-target transformation. 
Second, we perturb the source samples with Gaussian noise, augmenting the available data to a denser source distribution. 
Third, we perturb the flow interpolant with a noise schedule, creating a denser set of interpolating points between each source-target pair sampled during training. 
Our theoretical analysis supports these perturbations' ability to alleviate sparsity and improve generalization. 
Figure~\ref{fig:paths_concentricshells} provides complementary intuition: the stochastic injections induce more space-filling interpolants than standard flow matching. 

We validate our method on five challenging, high-dimensional image datasets covering natural and scientific domains, spanning problems in biology, radiology, and astronomy.
The tasks range from modeling cellular response to chemical treatments \citep{caie2010high}, to simulating the effects of cosmological redshift \citep{do2024galaxiesml}. 
Our stochastic injections significantly improve generation quality, outperforming vanilla flow matching by 13 FID (Frechet Inception Distance) points and other established baselines by 9 FID points. 
Moreover, our stochastic injections improve the transport cost between a given source sample and its generated counterpart in the target distribution, as measured by Euclidean distance in pixel space.
This means there is a closer visual correspondence between source and generated target,
which better highlights the true effect of the distributional transformation at a sample level.

In summary, we identify flow matching as a promising solution to the scientifically important problem of distribution-to-distribution learning,
but find that the standard formulation struggles in high dimensions due to data sparsity.
We propose stochastic injections that alleviate sparsity and improve generalization,
while being simple to implement, computationally cheap, and compatible with ODE sampling.
The resulting recipe turns flow matching into a practical tool for learning unpaired distribution-to-distribution transformations,
advancing data-driven modeling for scientific discovery.

\begin{figure}[bt]
    \centering
    \includegraphics[width=0.95\textwidth]{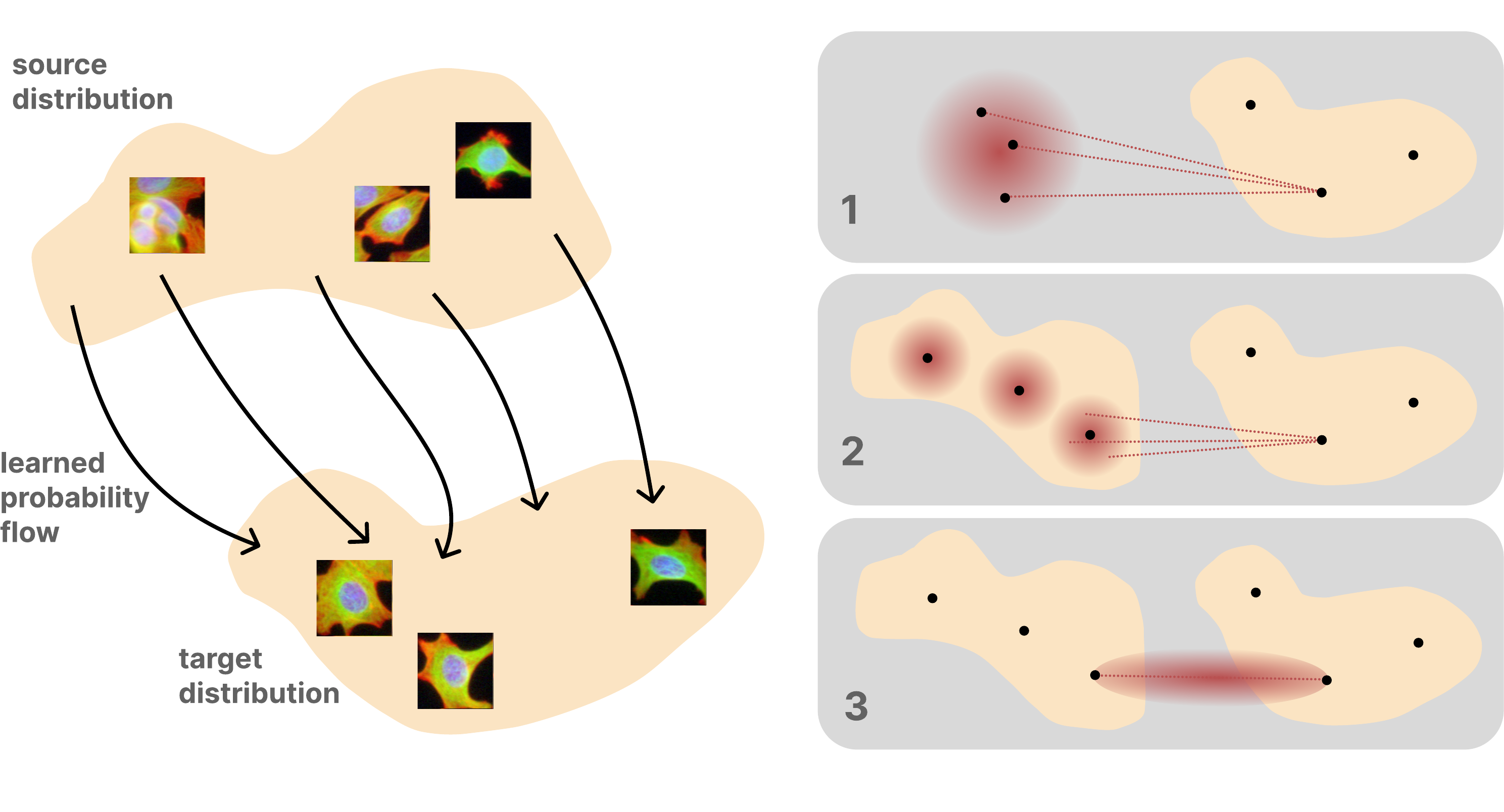}
    \caption{Our objective is to learn a flow from source onto target distributions, given unpaired training samples. Left: In the illustrated example, we learn to simulate cellular response to a chemical intervention. Right: We introduce stochastic injections that alleviate the sparsity challenges of distribution-to-distribution learning from finite target \textit{and} source training examples, by 1) transfer learning from the noise-to-target task, 2) perturbing source samples, and 3) perturbing the training interpolant.}
    \label{fig:figure1}
\end{figure}

\section{Probability Flows and Data Sparsity}

In this section, we review probability flows and show, using controlled experiments on a synthetic task, how \textit{data sparsity} impedes standard flow matching in the distribution-to-distribution setting.

\subsection{Preliminaries}
\label{sec:prelims}

Consider data distributions living on a metric space $\mathcal{X}$, such as $\mathcal{X} = \mathbb{R}^d$.
The objective of generative modeling with flow matching is to learn a time-dependent velocity field $v_t(x)$ such that each sample $x_0 \sim p_0$ from the source distribution is smoothly transported by $v_t(x)$ onto a sample $x_1 \sim p_1$ from the target distribution by the ODE
\begin{equation}
\dd x_t = v_t(x_t) \dd t,
\quad
x_0 \in p_0,
x_1 \in p_1,
t \in [0,1].
\end{equation}
The most common setting, which we call \textit{one-sided} distribution matching, is interested in learning only the data distribution $p_1$.\footnote{In the one-sided setting with Gaussian $p_0$, flow matching is equivalent to $v$-prediction diffusion \citep{salimans2022progressive}.}
In this case it is standard to choose $p_0 = \mathcal{N}(0,1)$
\citep{lipman2022flow,liu2022flow}.
In this work, we consider the more general and scientifically relevant \textit{two-sided} setting, where both $p_0$ and $p_1$ are non-trivial distributions from which we have access to a finite number of training examples. 
In the case of drug discovery, $p_0$ might be the control distribution cells and $p_1$ the distribution of cells after some chemical treatment.
Flow matching constructs an interpolant object $x_t$ conditioned on data samples $x_0 \sim p_0$ and $x_1 \sim p_1$,
\begin{equation}
x_t = \alpha_t x_0 + \beta_t x_1,
\quad
t \in [0,1],
\label{eq:deterministic_interpolant}
\end{equation}
where $\alpha_t, \beta_t$ are differentiable functions which define the interpolant path, satisfying $\alpha_0=\beta_1=1, \alpha_1=\beta_0=0$.
A core insight of flow matching is that to learn the \textit{unconditional} velocity field $v_t^\theta(x)$, parametrized by a neural network, it suffices to regress against the \textit{conditional} target velocity $v_t(x_t|x_0,x_1) := \partial_t{x}_t = (\partial_t \alpha_t) x_0 + (\partial_t \beta_t) x_1$ over all data pairs observed during training:
\begin{equation}
\mathcal{L}_v(\theta) = \mathbb{E}_{(x_0, x_1) \sim p_\text{data}, t \sim U(0,1)}  || v^\theta_t(x_t) - v_t(x_t | x_0, x_1) ||^2.
\label{eq:velocity_objective}
\end{equation}

\subsection{Data sparsity in distribution-to-distribution learning}
\label{sec:sparsity}

Equation~\ref{eq:deterministic_interpolant} exposes a key challenge in two-sided distribution learning with finite data:
supervision via $x_t$ arrives only along sparse, one-dimensional interpolant paths determined by a limited set of data samples $x_0$ and $x_1$.
Intuitively, in high dimensions, these thin supervision `tubes' cover a vanishing fraction of $\mathcal{X}$.
As $d$ grows or as the dataset shrinks, the learned $v_t^\theta$ must extrapolate more aggressively, degrading sample quality.

We make this challenge concrete with a synthetic task, \textsc{ConcentricShells}, where the source and target distributions are $d$-dimensional concentric hyperspheres.
The cost-minimizing transport moves points \textit{radially outward} from the inner to outer shell.
We train flow models from \textit{unpaired} samples and evaluate two metrics: cosine similarity and sinkhorn distance. The \textbf{cosine similarity} between a source sample $x_0$ and the generated output $\hat{x}_1$ should be close to 1 for the \textsc{ConcentricShells} geometry. The \textbf{sinkhorn distance}, an entropic-regularized Wasserstein proxy between generated and target samples, should be minimized.

Systematic experiments, carefully controlling the availability of training supervision, show that sparsity significantly degrades the quality of the learned transformation.
\textbf{Sparsity hurts as data dimension grows}: Figure~\ref{fig:scaling_data_dim} shows that as  $d$ scales from 2 to 2048,
the cosine similarity falls from 0.98 to 0.77
while the sinkhorn distance rises from 0.3 to 4.7.
\textbf{Sparsity hurts when data are few}:
complementarily, Figure~\ref{fig:scaling_num_data} varies the number of training examples from 128 to 8192 while fixing $d=512$, 
showing that both metrics degrade sharply as supervision thins.

\begin{wrapfigure}[13]{l}{0.4\textwidth}
  \begin{minipage}{0.4\textwidth}
    \centering
    \includegraphics[width=0.95\textwidth]{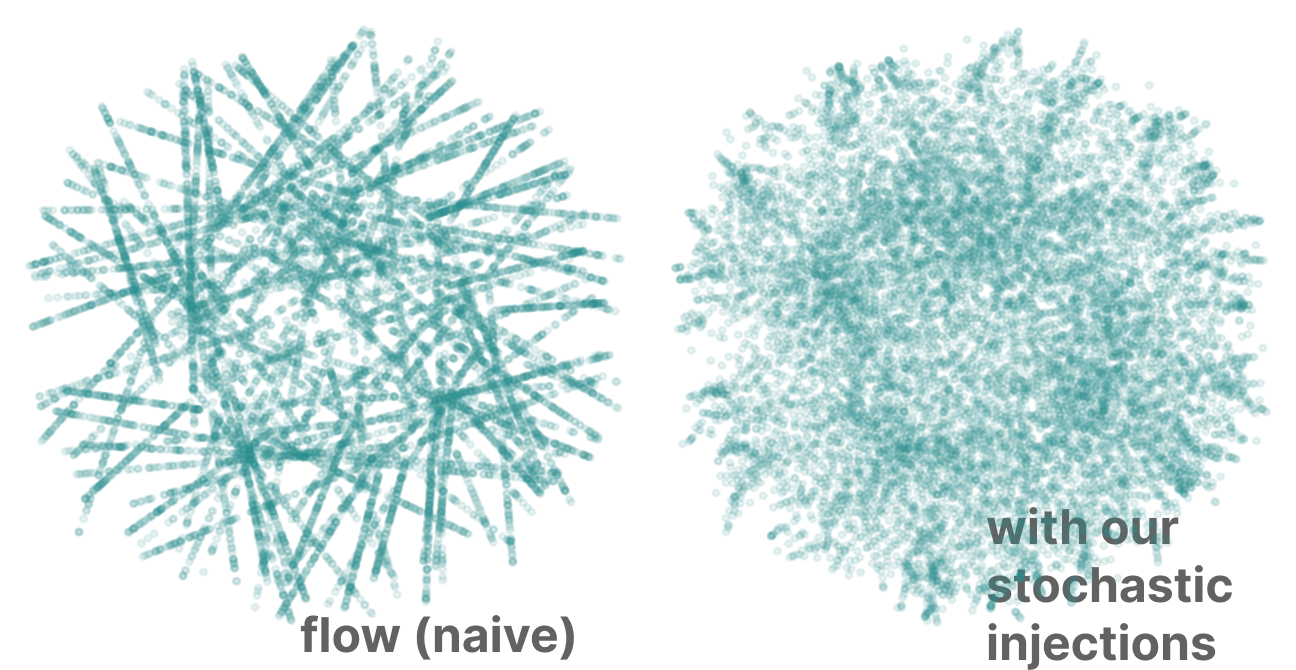}
    \caption{Our stochastic injections enable denser supervision, as visualized for $d=2$ \textsc{ConcentricShells}.}
    \label{fig:paths_concentricshells}
  \end{minipage}
\end{wrapfigure}

\textbf{Stochastic injections mitigate sparsity.}
Our proposed stochastic injections substantially stabilizes learning under both stressors,
making flow matching more robust to the sparsity induced by the curse of dimensionality.
Figure~\ref{fig:toy_experiments} (blue curves) shows that with this intervention,
the cosine alignment remains high across dimensions and number of data,
and the sinkhorn distance is significantly reduced,
with the largest gains precisely in the sparsity-challenged regimes of large $d$ and few samples where standard flow matching is most challenged.
In the following section, we introduce our stochastic injections and explain why how they densify the training signal,
leading to better distribution-to-distribution learning.

\begin{figure}[t]
    \centering
    \begin{subfigure}{0.49\textwidth}
        \centering
        \includegraphics[width=1.0\textwidth]{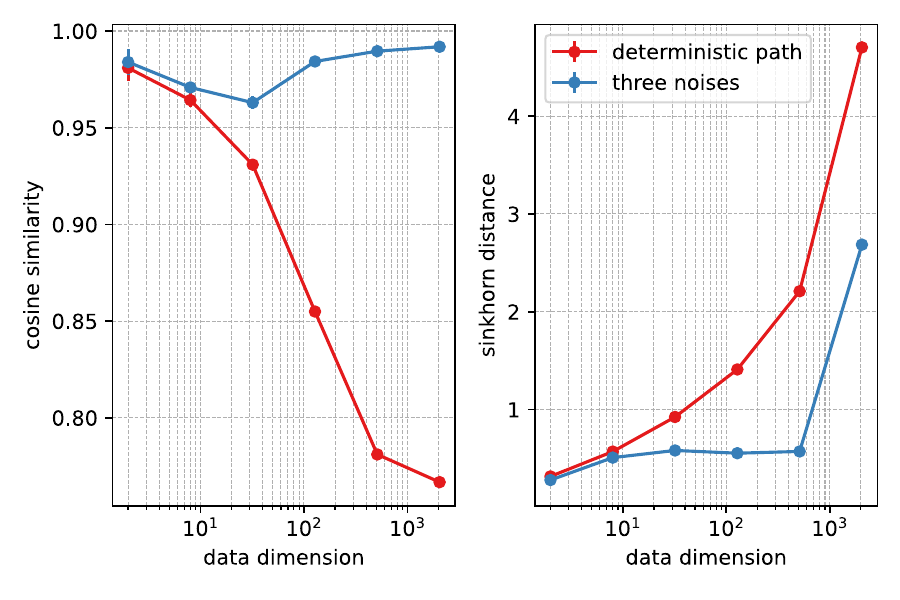}
        \caption{Scaling data dimension.}
        \label{fig:scaling_data_dim}
    \end{subfigure}
    \hfill
    \begin{subfigure}{0.49\textwidth}
        \centering
        \includegraphics[width=1.0\textwidth]{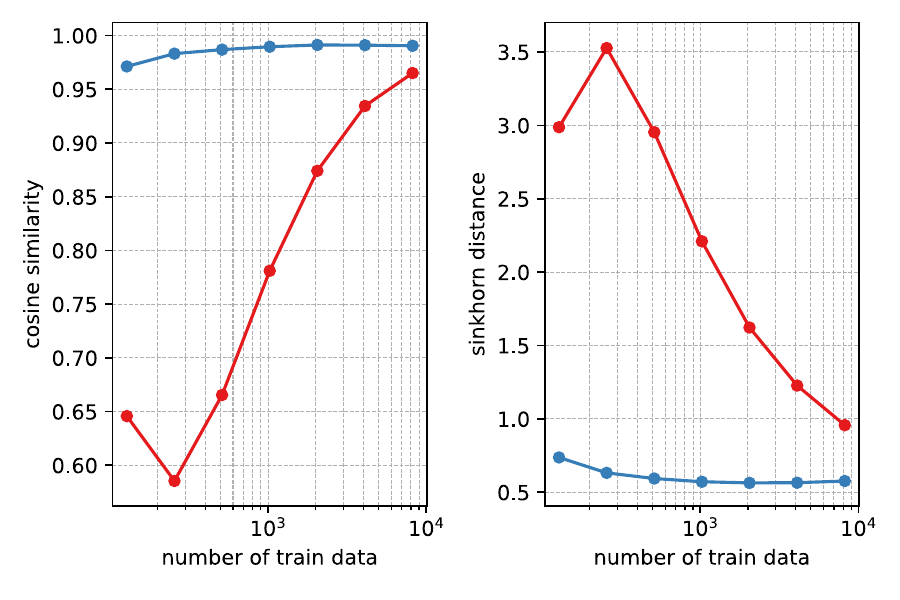}
        \caption{Scaling number of training data.}
        \label{fig:scaling_num_data}
    \end{subfigure}
    \caption{Sparsity stress tests on \textsc{ConcentricSpheres} demonstrate that flow matching struggles in high dimensions and with few training examples. Our stochastic injections help `de-sparsify' the supervision for the flow model, learning a more robust, generalizable velocity field.}
    \label{fig:toy_experiments}
\end{figure}

\section{Method}
\label{sec:method}

To counteract the challenges of sparse supervision highlighted in Section~\ref{sec:sparsity},
we introduce three \textit{stochastic injections} that densify training coverage
with minimal code changes and almost no computational overhead.

\subsection{Two-stage transfer learning}
\label{sec:method_twostage}

\textbf{Problem.} 
In one-sided distributional learning,
we can sample infinitely many $x_0 \sim p_0 = \mathcal{N}(0,I_d)$ from the source distribution for training,
so the supervision is dense.
In contrast, the two-sided setting relies on finite data samples from \textit{both source and target} distributions,
compounding the sparsity challenge.

\textbf{Solution.} Inspired by this insight and the success of transfer learning, 
we bootstrap from the supervision-abundant noise-to-target setting to our sparsity-challenged source-to-target setting.
For the first stage of training (defined by a fixed number of epochs), we draw samples from $(x_0, x_1) \sim p(x_0, x_1)$ but discard the source $x_0$, supervising the velocity field with the interpolant
\begin{equation}
x_t = \alpha_t z + \beta_t x_1,
\quad
z \in \mathcal{N}(0, I_d).
\end{equation}
In the second stage, we continue to draw samples $(x_0, x_1) \sim p(x_0, x_1)$, 
but fine-tune the same velocity field with now the interpolant
\begin{equation}
x_t = \alpha_t x_0 + \beta_t x_1.
\end{equation}

The first stage learns vector fields that flow on $p_1$, without suffering the two-sided sparsity.
The second stage can therefore adapt those fields to condition on $p_0$ described by data samples.
In summary, pre-training on the noise-to-target setting supplies dense supervision for a flow that samples $p_1$,
making the subsequent source-to-target fine-tuning stage more robust and sample-efficient.

\subsection{Perturbing the source distribution}
\label{sec:method_src}

\textbf{Problem.}
In the two-sided setting, both source and target distributions are observed only through finite data. 
Concretely, the empirical source distribution is a sum of Dirac masses centered on training examples.
This discreteness poses two challenges: 
1) the model may fail to generalize to unseen source samples,
and 2) even if the ground-truth flow is learned, the recovered target remains a discrete mixture rather than a continuous distribution.
We formalize this in the following lemma.
\begin{restatable}[]{lemma}{discreteprior}
    If the source distribution $p_0(x_0)$ is a mixture of delta distributions $\frac{1}{n}\sum_{i=0}^n \delta(x - x_i)$ with sample size $n$, then  the ground-truth probability-flow ODE can only recover a mixture of delta distributions with sample size $n$.
\end{restatable}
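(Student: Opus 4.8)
The plan is to exploit the fact that the probability-flow ODE is \emph{deterministic}: its solution defines a flow map $\Phi_t \colon \mathcal{X}\to\mathcal{X}$ sending an initial point $x_0$ to $x_t$, and the distribution of the generated sample $x_1$ is exactly the pushforward $(\Phi_1)_\#\, p_0$. Since pushing a Dirac mass through any measurable map again yields a Dirac mass, and pushforward is linear in the measure, $(\Phi_1)_\#\bigl(\tfrac1n\sum_i \delta_{x_i}\bigr) = \tfrac1n\sum_i \delta_{\Phi_1(x_i)}$, which is a mixture of at most $n$ delta distributions (exactly $n$ when the images $\Phi_1(x_i)$ are distinct).

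First I would make precise the claim that the ground-truth probability-flow ODE induces a well-defined deterministic map. Concretely, I would fix the ground-truth velocity field $v_t$ (the one whose time-marginals interpolate $p_0$ and $p_1$ through the continuity equation) and, under the standard regularity assumption that $v_t$ is locally Lipschitz in $x$ and continuous in $t$, invoke the Cauchy--Lipschitz (Picard--Lindel\"of) theorem: for every initial condition $x_0$ the IVP $\dot x_t = v_t(x_t)$, $x_{0}=x_0$, has a unique solution on $[0,1]$. I denote the time-$1$ solution operator by $\Phi_1$; it is a deterministic, measurable (indeed continuous) function of $x_0$.

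Next I would use the defining property of the probability-flow ODE --- that transporting $p_0$ along $v_t$ reproduces the prescribed marginal at $t=1$ --- to identify the recovered target with $(\Phi_1)_\# p_0$, and then evaluate this pushforward when $p_0 = \tfrac1n\sum_{i}\delta_{x_i}$. For any bounded continuous test function $f$, $\int f\, \dd\bigl((\Phi_1)_\# p_0\bigr) = \int (f\circ \Phi_1)\, \dd p_0 = \tfrac1n\sum_i f(\Phi_1(x_i))$, which exhibits the recovered target as $\tfrac1n\sum_i \delta_{\Phi_1(x_i)}$, a mixture of at most $n$ atoms. This establishes the lemma.

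The main obstacle --- indeed the only subtlety --- is the well-posedness step: the ground-truth $v_t$ solving the continuity equation need not be Lipschitz in general (it can become singular near $t=1$ for certain interpolant schedules), so I would either (i) state explicitly the mild regularity hypothesis on $v_t$ under which the flow map exists and is unique, or (ii) appeal to the superposition principle (Ambrosio-type representation of solutions of the continuity equation) to conclude that for $p_0$-almost every $x_0$ the ODE trajectory is single-valued, so the transport is still deterministic $p_0$-a.e. Once determinism of the map $x_0\mapsto x_1$ is secured, the pushforward-of-Diracs computation above is immediate.
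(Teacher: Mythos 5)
Your proposal is correct and follows essentially the same route as the paper's proof: both identify the probability-flow ODE with a deterministic flow map $\Phi$ and observe that the pushforward of $\frac{1}{n}\sum_i \delta_{x_i}$ under $\Phi$ is $\frac{1}{n}\sum_i \delta_{\Phi(x_i)}$. Your additional care about well-posedness (Picard--Lindel\"of or the superposition principle) is a welcome strengthening of the paper's bare assertion that ODE paths do not cross, and your injectivity remark (exactly $n$ atoms when the images are distinct) matches the paper's use of one-to-one-ness.
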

Proof is in Appendix~\ref{sec:appendix_thm}. 
The lemma implies that if source samples are sparser than target samples, the denser target distribution cannot be recovered.
Moreover, training solely on sparse source data introduces generalization error when new source samples appear at test time.

\textbf{Solution.}
To address the finite-support limitation of $p_0$, 
we densify the source distribution by injecting Gaussian perturbations. 
During training, we draw source and target samples from data, $(x_0, x_1) \sim p(x_0, x_1)$, and jitter the source sample to
\begin{equation}
\tilde{x}_0 = x_0 + z,
\quad
z \in \mathcal{N}(0, I_d)
\end{equation}
with $z$ independently sampled every batch.
The flow path now becomes
\begin{equation}
x_t = \alpha_t \tilde{x}_0 + \beta_t x_1.
\end{equation}
Notably, we do \textit{not} perturb $x_1$, as this causes the model to learn to learn a ``noisy'' target manifold.
In summary, injecting Gaussian noise into each source sample densifies the support of the empirically observed $p_0$, 
enabling better generalization and recovery of the target distribution.

\subsection{Perturbing the interpolant}
\label{sec:method_interpolant}

\textbf{Problem.}
With the deterministic interpolant of Equation~\ref{eq:deterministic_interpolant},
supervision lies on one-dimensional lines sampled during training.
The direct interpolation between sparse sets of points -- namely, the training examples from source and target -- can only result in sparse interpolating sets.
They have vanishing coverage in high dimensions,
and training on them can lead to poor generalization.

\textbf{Solution.}
To densify the interpolating points between each source-target pair sampled during training,
we leverage \textit{stochastic} interpolants that preserve the same marginal distributions at $t=0$ and $t=1$.
Introduced by \citet{albergo2023stochastic},
this framework generalizes the flow interpolant object (Equation~\ref{eq:deterministic_interpolant}) to
\begin{equation}
x_t = \alpha_t x_0 + \beta_t x_1 + \gamma_t z,
\quad
x_0 \in p_0,
x_1 \in p_1,
z \in \mathcal{N}(0,I_d),
t \in [0,1]
\label{eq:interpolant}
\end{equation}
where $\gamma_t$ is a differentiable functions satisfying $\gamma_0=\gamma_1=0$.
A similar objective as flow matching (Equation~\ref{eq:velocity_objective}) is employed to learn the velocity field $v_t^\theta$,
\begin{equation}
\mathcal{L}_v(\theta) = \mathbb{E}_{(x_0, x_1) \sim p_\text{data}, z \sim \mathcal{N}(0, I), t \sim U(0,1)}  || v^\theta_t(x_t) - v_t(x_t | x_0, x_1, z) ||^2,
\label{eq:loss-v}
\end{equation}
and to perform inference by numerically solving Equation~\ref{eq:deterministic_interpolant}. 
Additionally, stochastic interpolants support SDE sampling with a score field that models $s_t(x) = \nabla \log p_t(x)$.\footnote{If $p_0$ or $p_1$ is Gaussian, i.e. the one-sided setting, the score can be directly obtained from $v_t$ by the relation 
$
v_t(x) = \frac{\dot{\beta}_t}{\beta_t} x - \gamma_t \big( \dot{\gamma}_t - \frac{\dot{\beta}_t \gamma_t}{\beta_t} \big) s_t(x)
$,
without separately learning the score.}
It may be shown that, for every $t$ where $\gamma_t \neq 0$,
\begin{equation}
s_t(x) = -\gamma_t^{-1} \mathbb{E}(z | x_t = x).
\end{equation}
Similarly to learning the velocity field, a neural-network-parametrized $s_t^\phi$ may be learnt by regressing against $z$ over the interpolants observed during training. 
For numerical stability, we often choose the parametrization $\eta_t(x) = \gamma_t s_t(x)$ and learn
\begin{equation}
\mathcal{L}_s(\phi) = \mathbb{E}_{(x_0, x_1) \sim p_\text{data}, z \sim \mathcal{N}(0, I), t \sim U(0,1)} ||\eta^\phi_t(x_t) - \eta_t(x_t | x_0, x_1, z)||^2.
\end{equation}

At inference, we can choose an arbitrary diffusion schedule $\sigma_t$ and numerically integrate with an SDE solver, such as Euler-Maruyama:
\begin{equation}
\dd x_t = \big( v_t(x_t) - \frac{1}{2} \sigma_t^2 \gamma_t^{-1} \eta_t(x_t) \big) \dd t + \sigma_t \dd W_t,
\label{eq:sample_sde}
\end{equation}
where $W_t$ is the Wiener process.

Injecting stochasticity into the interpolant \textit{densifies} the distribution of the interpolating points, 
tightening the discrepancy between empirical samples and population.
We formalize this intuition with the following theorem.
\begin{restatable}[Informal]{theorem}{gengap}\label{thm:gengap}
    Let $\gL_\text{FM}(\theta,t),\gL_\text{SI}(\theta,t)$ be the population risk of flow matching and our stochastic injection loss at time $t$, and $\hat{\gL}_\text{FM}(\theta,t),\hat{\gL}_\text{SI}(\theta,t)$ be their empirical risks with $n$ i.i.d. samples. Let $p_t(x_t),q_t(x_t)$ be the respective population distribution of $x_t$, and  $\hat{p}_t(x_t),\hat{q}_t(x_t)$ be their empirical distributions, the $1$-Wasserstein distance $\sW_1(p_t, \hat{p_t}), \sW_1(q_t, \hat{q_t})$ characterizes each loss' generalization gap. Moreover, $\sW_1(q_t, \hat{q}_t)\leq \sW_1(p_t, \hat{p}_t)$.
\end{restatable}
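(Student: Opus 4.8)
The statement splits into two nearly independent pieces, which I would prove separately: (i) for either objective, the per-time generalization gap $|\gL(\theta,t)-\hat{\gL}(\theta,t)|$ is at most a ($t$-dependent) constant times the $1$-Wasserstein distance between the population and empirical laws of the interpolant $x_t$; and (ii) the stochastic-interpolant law $q_t$ is a Gaussian mollification of the flow-matching law $p_t$, which forces $\sW_1(q_t,\hat q_t)\le\sW_1(p_t,\hat p_t)$.

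\textbf{Step 1: the gap is governed by a Wasserstein distance.} Expanding the squared norm in Equation~\ref{eq:velocity_objective} (resp.\ Equation~\ref{eq:loss-v}) and using the conditional-flow-matching identity $\E[\,v_t(x_t|x_0,x_1)\,|\,x_t\,]=v_t(x_t)$, where $v_t$ denotes the marginal velocity, the $\theta$-dependent part of the objective at a fixed time is $\E_{x_t\sim p_t}[\ell_t^\theta(x_t)]$ (resp.\ $\E_{x_t\sim q_t}[\ell_t^\theta(x_t)]$) up to a $\theta$-independent remainder, with $\ell_t^\theta(x)=\|v_t^\theta(x)\|^2-2\langle v_t^\theta(x),v_t(x)\rangle$. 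Under the regularity assumptions that $v_t^\theta$ is Lipschitz and the data have bounded (or light-tailed) support, $\ell_t^\theta$ is Lipschitz on the relevant region with some constant $C_t$ depending on the Lipschitz norm and sup-norm of $v_t^\theta$ and on the interpolation schedule. Kantorovich--Rubinstein duality then yields $|\gL_\text{FM}(\theta,t)-\hat{\gL}_\text{FM}(\theta,t)|\le C_t\,\sW_1(p_t,\hat p_t)$ and, identically, $|\gL_\text{SI}(\theta,t)-\hat{\gL}_\text{SI}(\theta,t)|\le C_t\,\sW_1(q_t,\hat q_t)$ --- the sense in which $\sW_1$ ``characterizes'' each gap. (The objective one actually minimizes uses conditional targets read off the $n$ samples rather than the marginal $v_t$; since that estimator is unbiased for the population objective, its expected gap is precisely the input-distribution mismatch quantified here, with the conditional-for-marginal substitution adding only a mean-zero fluctuation.)

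\textbf{Step 2: comparing the two Wasserstein distances.} The stochastic interpolant is $x_t=\alpha_t x_0+\beta_t x_1+\gamma_t z$ with $z\sim\mathcal N(0,I_d)$ drawn independently of $(x_0,x_1)$; writing $\nu_t=\mathcal N(0,\gamma_t^2 I_d)$, this means $q_t=p_t\ast\nu_t$ at the population level, and since the auxiliary noise integrates out of the mixture over the $n$ data atoms, $\hat q_t=\hat p_t\ast\nu_t$ at the empirical level. It then suffices to invoke that convolution with a fixed probability measure is non-expansive in $\sW_1$: taking an optimal coupling $(X,Y)\sim\pi$ of $(p_t,\hat p_t)$ and an independent $Z\sim\nu_t$, the pair $(X+Z,\,Y+Z)$ is a coupling of $(q_t,\hat q_t)$ with cost $\E\|(X+Z)-(Y+Z)\|=\E\|X-Y\|=\sW_1(p_t,\hat p_t)$, and minimizing over couplings gives $\sW_1(q_t,\hat q_t)\le\sW_1(p_t,\hat p_t)$. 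The identical computation covers the source-perturbation injection of Section~\ref{sec:method_src}, for which $x_t=\alpha_t x_0+\beta_t x_1+\alpha_t z$, i.e.\ $\nu_t=\mathcal N(0,\alpha_t^2 I_d)$.

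\textbf{Main obstacle.} Step 2 is essentially routine once the convolution structure is identified; the real work lives in Step 1. Two things need care: (a) pinning down hypotheses under which $\ell_t^\theta$ is Lipschitz with a finite, explicit $C_t$ --- this requires control of $v_t^\theta$ and of the data tails, together with attention to $t\to 0,1$, where schedule derivatives such as $\dot{\alpha}_t,\dot{\beta}_t,\dot{\gamma}_t$ can blow up, so $C_t$ must be allowed to depend on $t$ (consistent with the per-time statement); and (b) ensuring the reduction genuinely lands on $\sW_1$ of the law of $x_t$ rather than of the joint law of $(x_0,x_1)$ --- this is exactly where the conditional-flow-matching identity enters, and where the conditional-versus-marginal target discrepancy flagged above must be handled. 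A natural strengthening, not needed for the stated result, would be to upgrade Step 2 to a strict inequality when $\gamma_t>0$ via a quantitative Gaussian-smoothing estimate, which would make the de-sparsification effect explicit.
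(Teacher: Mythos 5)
Your proposal follows essentially the same route as the paper's proof: bound each gap via Kantorovich--Rubinstein duality under a Lipschitz assumption on the per-sample loss, then observe that $q_t=p_t\ast\nu_t$ and $\hat q_t=\hat p_t\ast\nu_t$ and invoke the non-expansiveness of convolution in $\sW_1$ (which the paper cites as the Wasserstein-reducing property of Gaussian smoothing, and which you instead prove directly by the shared-noise coupling). Your Step 1 is in fact somewhat more careful than the paper's, which simply assumes the loss is Lipschitz and does not address the conditional-versus-marginal target substitution or the endpoint blow-up of the schedule derivatives that you flag.
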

See Appendix~\ref{sec:appendix_thm} for a more formal statement and proof.
This theorem allows us to use the $1$-Wasserstein distance to approximately measure the generalization gap,
and shows that the gap is smaller if $x_t$ is a stochastic interpolation. 
In summary, 
perturbing the interpolant path densifies supervision along the interpolating path, tightening the generalization gap.

\textbf{Designing the interpolant noise schedule.} 
Stochastic interpolants extend the design space to the choice of $\gamma_t$. 
Prior work \citep{albergo2022building} favored
$\gamma_t = \sqrt{2t(1-t)}$, 
which maintains identical variance $\alpha_t^2 + \beta_t^2 + \gamma_t^2 = 1$ over all timesteps, assuming linear path and $p_0,p_1$ normalized to unit variance. 
However, this does not guarantee optimality in real data distributions;
in fact, the divergence of $\partial_t \gamma_t$ at the endpoints creates numerical instability when regressing the conditional target velocity.
In this work, 
we explore the shape and scale of $\gamma$ on a real world image two-sided image distribution,
considering three noise schedules,
\begin{equation}
\gamma_t = 
\begin{cases}
  a \sqrt{2t(1-t)} & \text{square-root}\\
  a \sin^2(\pi t)  & \text{sin-squared} \\
  a \ t(1-t)         & \text{quadratic}
\end{cases}
\quad
,
\quad
a \in \mathbb{R}^+,
\label{eq:gamma_schedules}
\end{equation}
where $a$ controls the noise scale.
Since our work focuses on the impact of stochastic injection, 
we fix $\alpha_t=1-t$ and $\beta_t=t$, the conditional optimal transport between two Gaussians.

\subsection{Model}

Algorithm~\ref{alg:training} summarizes the flow matching training with all three stochastic injections.
We implement both the velocity and score fields with a tied UNet \citep{rombach2021highresolution},
using a shared backbone and a lightweight projection head that maps the feature space to $v_\theta$ and $\eta_\phi$ separately.
We optimize the combined objective $\mathcal{L}_v(\theta) + \mathcal{L}_\eta(\phi)$.
Following common practice in image generation, we perform training and inference in the latent space of a variational autoencoder (VAE)
which reduces data dimension and compute.

\section{Experiments}
\label{sec:experiments}

\subsection{Datasets}
\label{sec:experiments_datasets}

We demonstrate the efficacy of our method on five tasks representing a wide spectrum of scientific and natural image domains,
modeling cellular response, seasonal transitions, disease progression, and galaxy evolution.
On \textbf{\textsc{BBBC}}, a cell microscopy dataset, we learn chemically-conditioned morphological changes, where cells under the control condition form the source distribution and cells post-intervention form the target distribution.
On \textbf{\textsc{SeasoNet}}, which comprises satellite images covering Germany over four seasons, 
and on \textbf{\textsc{Yosemite}}, which comprises user-uploaded images of Yosemite National Park, 
we learn to map images from the summer to the same view in winter.
On \textbf{\textsc{MIMIC-CXR}}, we learn the markers of Pleural Effusion (PE) with chest radiographs of PE-negative patients forming the source distribution and those of PE-positive patients forming the target distribution.
On \textbf{\textsc{GalaxiesML}}, we learn cosmological evolution reflected in galaxy images, transforming low redshift images onto high redshift ones. 
See Appendix~\ref{sec:appendix_datasets} for further details.

\subsection{Results}

\begin{table}[!tb]
\caption{Frechet Inception Distance (FID) \citep{heusel2017gans} of target samples on the held out test set demonstrate that our method significantly improves standard flow matching and outperforms baselines across five diverse image datasets.}
\vspace{-1em}
\label{tab:fids}
\begin{center}
\begin{tabular}{lrrrrr}
\toprule
& \textsc{BBBC} & \textsc{SeaoNet} & \textsc{Yosemite} & \textsc{MIMIC-CXR} & \textsc{GalaxiesML} \\
\midrule
UNSB & 94.6 & 89.6 & 73.9 & 45.0 & 178.1 \\
DDIB & 30.0 & 71.4 & 84.7 & 30.9 & 10.0 \\
SDEdit & 164.1 & 299.7 & 261.7 & 293.0 & 136.9 \\
Flow (standard) & 33.6 & 80.0 & 87.3 & 34.9 & 13.1 \\
Flow w. stochastic (ours) & \textbf{19.9} & \textbf{60.5} & \textbf{71.5} & \textbf{22.9} & \textbf{7.4} \\
\bottomrule
\end{tabular}
\end{center}
\end{table}

\begin{figure}[!tb]
    \centering
    \includegraphics[width=\textwidth]{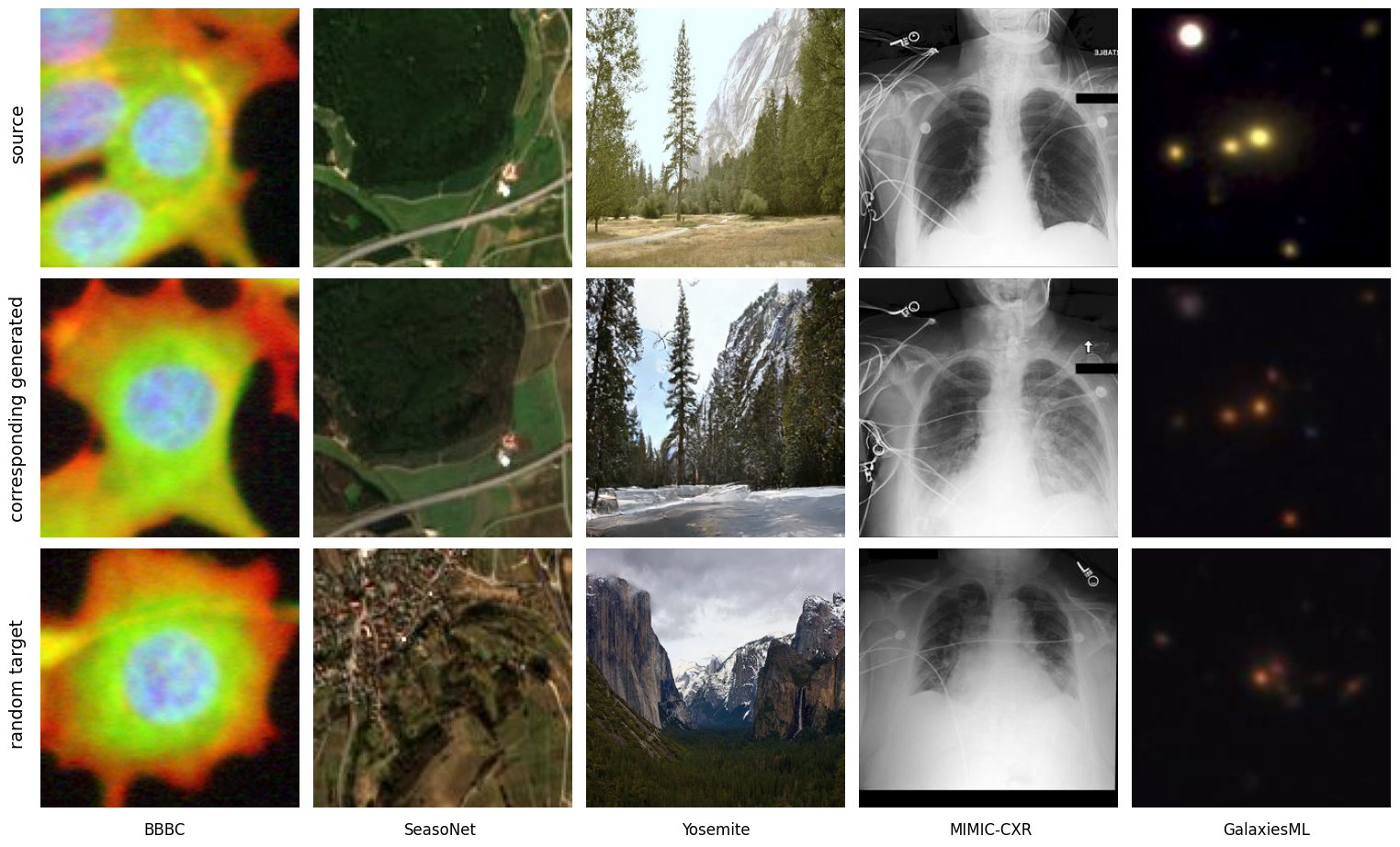}
    \caption{Qualitative examples. The top row is from the source side of the test set. The middle row is the corresponding model generation with the top image as the source, using a flow model trained with all three forms of stochastic injection. The bottom row is a random sample from the target side of the test set. 
    }
    \label{fig:examples}
\end{figure}

\textbf{Baselines.}
We evaluate our proposed stochastic injections by comparing to standard \textbf{flow matching}, as well as several other unpaired distribution-to-distribution learning methods.
\textbf{UNSB} \citep{kim2023unpaired} leverages a multi-step GAN to learn a Schrödinger bridge variant between the source and target.
\textbf{DDIB} \citep{su2022dual} learns to transform each source sample to an intermediate Gaussian latent and then to the generated target sample.
\textbf{SDEdit} \citep{meng2021sdedit} partially noises the source sample and denoises the remaining timesteps into a target sample.

\textbf{Improved distribution learning.}
Table~\ref{tab:fids} shows that across all datasets,
the stochastic injections lead to significant improvements over standard flow matching, averaging 13 FID points, 
while also outperforming UNSB, DDIB, and SDEdit baselines by 9 FID points.
Qualitative examples in Figure~\ref{fig:examples} show that our generated samples are highly realistic while preserving the structure of the source image.
For example, the \textsc{BBBC} column shows a conditional generation with the floxuoridine intervention,
which inhibits DNA replication.
The generated sample exhibits the expected reduced density while maintaining the position of the central cell's nucleus and surrounding cytoskeleton.
The \textsc{MIMIC-CXR} example shows blunting of the costophrenic angle characteristic of pleural effusion, while preserving the patient's orientation.
Further examples, including comparisons to the baselines, are available in the appendix Figure~\ref{fig:examples_more}.

\textbf{Improved source-target correspondence.}
Preserving the content of the source image and modifying only the characteristics specific to the true transformation is desirable because it highlights the true transformation of interest.
In our setting, this correspondence between source and target correlates to the transport cost.
Though flow models do not solve an optimal transport problem,
we observe that implicit regularization results in a visual correlation between each source sample and its generated target.
Furthermore, our stochastic injections \textit{reduce the transport costs} of standard flow matching.
This is reflected in two metrics:
the pixel-space mean-squared-error (MSE) between input source image and generated target.\footnote{Pixel values are normalized to the range $[-1,1]$ and the reported MSE number is averaged over the image dimension and across all pairs.},
and the percentage of source samples that are \textit{matched} to their corresponding generated sample by linear sum assignment.\footnote{
We use pixel-space Euclidean distance as the cost metric for the linear assignment, with the set of source images forming one side of the bipartite graph and the set of generated target images forming the other side.
Intuitively, a score of 100\% means that each source image is most similar in pixel space to its corresponding generated sample.
}
Table~\ref{tab:otness} shows that the stochastic injections reduces MSEs by 22\% and improves assignment matches by 15 percentage points,
indicating stronger alignment between source target to better highlight the true effect of the underlying transformation.

\begin{table}[!tb]
\caption{The stochastic injections generally improve the alignment between the source sample and the generated target sample, compared to standard (fully deterministic) flow matching.}
\vspace{-1em}
\label{tab:otness}
\begin{center}
\begin{tabular}{llrrrrr}
\toprule
 & & BBBC & SeasoNet & Yosemite & MIMIC-CXR & GalaxiesML \\
\midrule
\multirow{2}{*}{MSE ($\downarrow$)} 
 & determ. & \textbf{0.055} & 0.042 & 0.060 & 0.025 & 0.0054 \\
 & stoch.  & 0.057 & \textbf{0.032} & \textbf{0.041} & \textbf{0.014} & \textbf{0.0046} \\
\midrule
\multirow{2}{*}{\% matched ($\uparrow$)}
 & determ. & \textbf{24\%} & 39\% & 63\% & 54\% & 13\% \\
 & stoch.   & \textbf{24\%} & \textbf{54\%} & \textbf{71\%} & \textbf{74\%} & \textbf{47\%} \\
\bottomrule
\end{tabular}
\end{center}
\end{table}

\begin{table}[!tb]
\caption{Ablations on two datasets support that each form of stochastic injection is independently beneficial for distribution learning, and the best model uses a all three. All numbers are test FID.}
\vspace{-1em}
\label{tab:ablate_each_noise}
\begin{center}
\begin{tabular}{lrrrrr}
\toprule
& all noises & no two-stage & no src noise & no interp. noise & no noises \\
\midrule
\textsc{BBBC} & \textbf{19.9} & 20.6 & 27.9 & 23.2 & 33.6 \\
\textsc{SeasoNet}  & \textbf{60.5} & 62.8 & 62.2 & 76.0 & 80.0 \\
\bottomrule
\end{tabular}
\end{center}
\end{table}

\subsection{Ablations}

\textbf{Each form of stochastic injection helps.} 
Table~\ref{tab:ablate_each_noise} shows that removing any of the three stochastic injections degrades performance:
all are necessary to achieve the strongest FIDs.
Their relative contributions depend on the data distribution. 
For example, perturbing the source distribution is especially valuable on \textsc{BBBC}, while the stochastic interpolant drives most of improvement on \textsc{SeasoNet}.

\textbf{Interpolant noise schedule.}
We experiment with several choices of $\gamma_t$ on \textsc{BBBC}.
Figure~\ref{fig:gamma_func} shows that the sine-squared noise schedule achieves the strongest improvement over the deterministic baseline.
Interestingly, performance is degraded by the square-root noise schedule favored in prior work, possibly due to numerical instability as discussed in Section~\ref{sec:method_interpolant}.
The scale of the noise schedule ($a$ in Equation~\ref{eq:gamma_schedules}) is also important:
Figure~\ref{fig:gamma_scale} suggests that too low of a noise scale underutilizes the benefit of this stochastic injection.
Guided by these results, we fixed $\gamma_t$ to be the sine-squared schedule at scale 1.0 for the main experiments.
We found that these are reasonable choices to achieve strong gains over the deterministic interpolant, but acknowledge that optimal choices require dataset-specific tuning.

\textbf{Sampling strategy.}
Stochastic interpolants ($\gamma_t\neq0$) support both ODE and SDE sampling -- which is better?
Consistent with \citet{ma2024sit}'s observations, Figure~\ref{fig:ode_vs_sde} shows that SDE surpasses ODE in the limit of many inference steps.
However, we find that such gains can be obtained with a simpler technique: adding Gaussian noise to each source sample.
With this inference-time noising,
we find that SDE sampling no longer offers any gains over ODE.
Since ODEs 
are cheaper to simulate numerically,
require fewer inference steps, and
do not require learning score field,
we reported our main results with ODE-generated samples with inference-time source noising as described.\footnote{The deterministic flow baseline is sampled without this inference-time Gaussian noise, as it is OOD for the model and significantly degrades sample quality.}

\begin{figure}[!tb]
    \centering
    \begin{subfigure}{0.32\textwidth}
        \centering
        \includegraphics[width=1.0\textwidth]{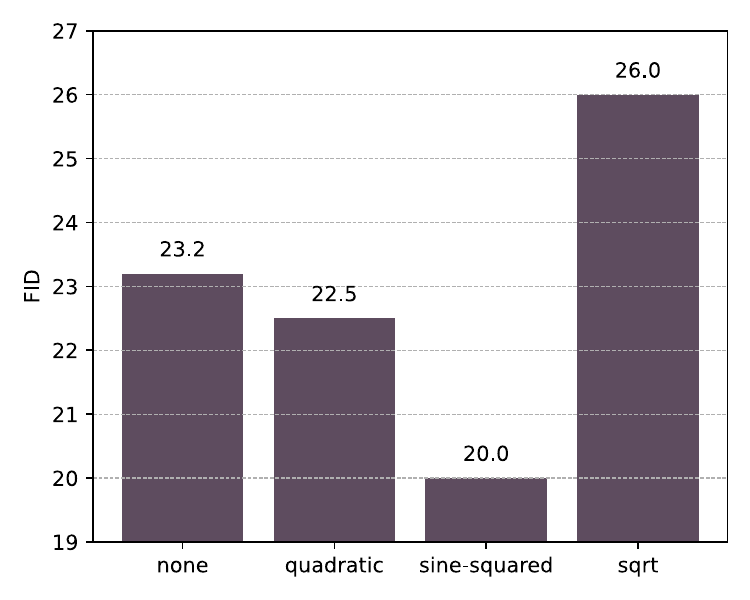}
        \caption{The sine-squared interpolant noise schedule outperforms quadratic and sqrt schedules.}
        \label{fig:gamma_func}
    \end{subfigure}
    \hfill
    \begin{subfigure}{0.32\textwidth}
        \centering
        \includegraphics[width=1.0\textwidth]{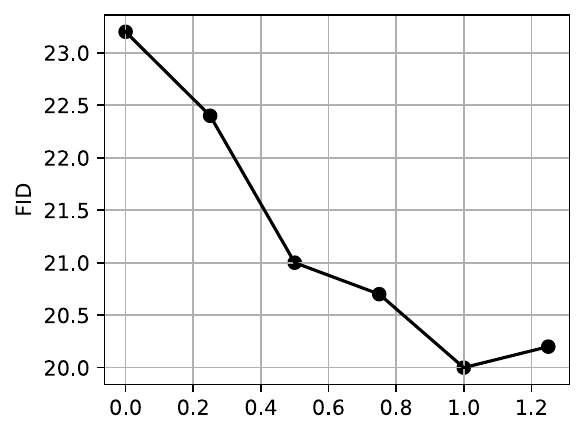}
        \caption{The noise scale $a$ (shown for the sine-squared noise schedule) can be tuned for best performance.}
        \label{fig:gamma_scale}
    \end{subfigure}
    \hfill
    \begin{subfigure}{0.32\textwidth}
        \centering
        \includegraphics[width=1.0\textwidth]{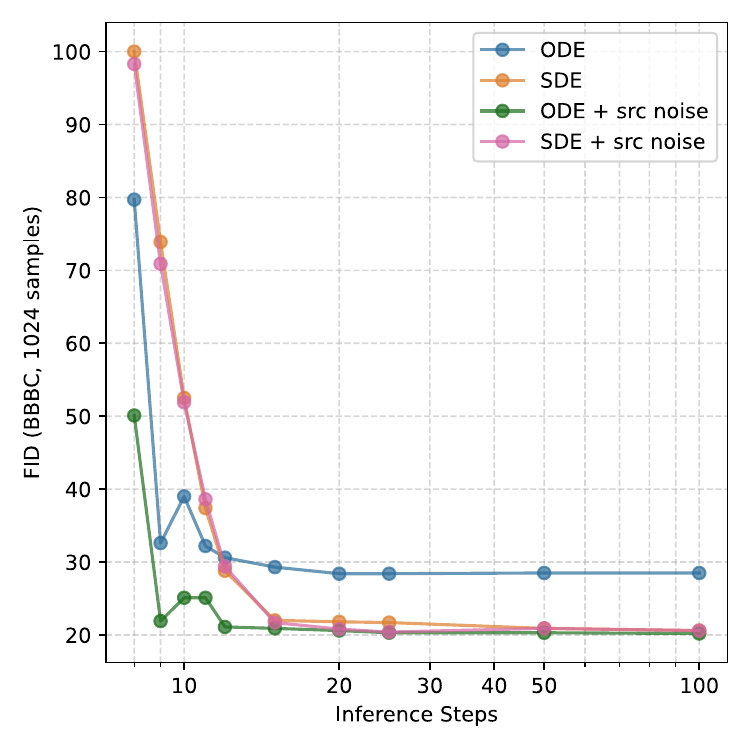}
        \caption{At inference, ODE sampling with Gaussian perturbations to the source gives best results.}
        \label{fig:ode_vs_sde}
    \end{subfigure}
    \label{fig:choosing_gamma}
    \caption{Ablations suggest that the sine-squared function (Figure~\ref{fig:gamma_func}) at scale $a=1.0$ (Figure~\ref{fig:gamma_scale}) is the best choice of interpolant noise schedule $\gamma$. For inference, we find that ODE sampling with source noising performs better than SDE sampling. All metrics are FID are reported on the test set of \textsc{BBBC}.}
\end{figure}

\section{Related Work}

\textbf{Distribution-to-distribution learning.}
Earlier works \citep{zhu2017unpaired, park2020contrastive} leveraged GANs with cycle consistency loss to learn transformations between two image distributions.
More recently, \citep{kim2023unpaired} designs a multi-step GAN with optimal transport regularization.
However, GANs can suffer from mode collapse; we focus on improving probability flow models, which have become the dominant paradigm for scalable and stable generative modeling.
\citet{liu20232} and \citet{delbracio2023inversion} learn diffusion bridges to transform between two distributions, 
but require access to paired data.
\citet{tong2023conditional} and \citet{de2021diffusion} aim to learn optimal transport (OT) maps, but rely on noisy approximations to the true OT pairings and scale poorly to high-dimensional data.
Other works adapt traditional noise-to-target diffusion to learn distribution-to-distribution transformations: \citet{su2022dual} by concatenating two diffusion models and \citet{meng2021sdedit} by denoising from the source samples at an intermediate timestep.
In contrast, we directly learn the transformation the source and target distribution, and show that this achieves superior sample quality.

\textbf{Stochastic interpolants.}
\citet{albergo2022building} and \citet{albergo2023stochastic} introduce stochastic interpolants (SI), unifying flows and diffusion under a common framework for bridging two data distributions.
\citet{albergo2022building} shows that the stochasticity of the interpolant can improve learning on toy distributions.
\citet{albergo2023datadependentcouplings} uses the language of SI for image translation with paired data, 
but chooses a deterministic interpolant equivalent to the usual flow matching.
\citet{ma2024sit} explores the diffusion design space using the SI framework, but does not study general source-to-target transformations.
\citet{hollmer2025open} leverages stochastic interpolants for the design of inorganic crystalline materials.
Concurrent work \citet{singh2025latent} learns interpolants in conjunction with a VAE.
Despite recent interest, we are still in the nascent stages of understanding SI's practical utility in their full generality beyond flow and diffusion special cases. 
Our work demonstrates that perturbing the interpolant path, in conjunction with other stochastic injection techniques, improves distribution-to-distribution learning.
\section{Conclusion}

In this work, we explore flow models for unpaired distribution-to-distribution learning,
and identify \textit{sparse supervision from finite data} as a core challenge impeding performance.
Our proposed stochastic injections alleviate this data sparsity,
lowering FIDs by 13 points relative to standard flow matching and 9 points relative to other baselines,
across five imaging datasets spanning biology, satellite data, health, and astronomy.
Our contributions make flow matching a powerful method for modeling diverse distributional transformations in science.

\section*{Ethics Statement}

Our research is primarily methodological in nature and does not raise ethical concerns regarding data privacy, fairness, or potential misuse.

\section*{Reproducibility Statement}

We provide the code to reproduce our experiments at \url{https://github.com/ssu53/StochasticInjections}.

\bibliography{references}

\begin{thebibliography}{33}
\providecommand{\natexlab}[1]{#1}
\providecommand{\url}[1]{\texttt{#1}}
\expandafter\ifx\csname urlstyle\endcsname\relax
  \providecommand{\doi}[1]{doi: #1}\else
  \providecommand{\doi}{doi: \begingroup \urlstyle{rm}\Url}\fi

\bibitem[Aihara et~al.(2019)Aihara, AlSayyad, Ando, Armstrong, Bosch, Egami, Furusawa, Furusawa, Goulding, Harikane, et~al.]{aihara2019second}
Hiroaki Aihara, Yusra AlSayyad, Makoto Ando, Robert Armstrong, James Bosch, Eiichi Egami, Hisanori Furusawa, Junko Furusawa, Andy Goulding, Yuichi Harikane, et~al.
\newblock Second data release of the hyper suprime-cam subaru strategic program.
\newblock \emph{Publications of the Astronomical Society of Japan}, 71\penalty0 (6):\penalty0 114, 2019.

\bibitem[Albergo \& Vanden-Eijnden(2022)Albergo and Vanden-Eijnden]{albergo2022building}
Michael~S Albergo and Eric Vanden-Eijnden.
\newblock Building normalizing flows with stochastic interpolants.
\newblock \emph{arXiv preprint arXiv:2209.15571}, 2022.

\bibitem[Albergo et~al.(2023{\natexlab{a}})Albergo, Boffi, and Vanden-Eijnden]{albergo2023stochastic}
Michael~S Albergo, Nicholas~M Boffi, and Eric Vanden-Eijnden.
\newblock Stochastic interpolants: A unifying framework for flows and diffusions.
\newblock \emph{arXiv preprint arXiv:2303.08797}, 2023{\natexlab{a}}.

\bibitem[Albergo et~al.(2023{\natexlab{b}})Albergo, Goldstein, Boffi, Ranganath, and Vanden-Eijnden]{albergo2023datadependentcouplings}
Michael~S Albergo, Mark Goldstein, Nicholas~M Boffi, Rajesh Ranganath, and Eric Vanden-Eijnden.
\newblock Stochastic interpolants with data-dependent couplings.
\newblock \emph{arXiv preprint arXiv:2310.03725}, 2023{\natexlab{b}}.

\bibitem[Anstine \& Isayev(2023)Anstine and Isayev]{anstine2023generative}
Dylan~M Anstine and Olexandr Isayev.
\newblock Generative models as an emerging paradigm in the chemical sciences.
\newblock \emph{Journal of the American Chemical Society}, 145\penalty0 (16):\penalty0 8736--8750, 2023.

\bibitem[Caie et~al.(2010)Caie, Walls, Ingleston-Orme, Daya, Houslay, Eagle, Roberts, and Carragher]{caie2010high}
Peter~D Caie, Rebecca~E Walls, Alexandra Ingleston-Orme, Sandeep Daya, Tom Houslay, Rob Eagle, Mark~E Roberts, and Neil~O Carragher.
\newblock High-content phenotypic profiling of drug response signatures across distinct cancer cells.
\newblock \emph{Molecular cancer therapeutics}, 9\penalty0 (6):\penalty0 1913--1926, 2010.

\bibitem[Chen \& Niles-Weed(2022)Chen and Niles-Weed]{chen2022asymptotics}
Hong-Bin Chen and Jonathan Niles-Weed.
\newblock Asymptotics of smoothed wasserstein distances.
\newblock \emph{Potential Analysis}, 56\penalty0 (4):\penalty0 571--595, 2022.

\bibitem[De~Bortoli et~al.(2021)De~Bortoli, Thornton, Heng, and Doucet]{de2021diffusion}
Valentin De~Bortoli, James Thornton, Jeremy Heng, and Arnaud Doucet.
\newblock Diffusion schr{\"o}dinger bridge with applications to score-based generative modeling.
\newblock \emph{Advances in neural information processing systems}, 34:\penalty0 17695--17709, 2021.

\bibitem[Delbracio \& Milanfar(2023)Delbracio and Milanfar]{delbracio2023inversion}
Mauricio Delbracio and Peyman Milanfar.
\newblock Inversion by direct iteration: An alternative to denoising diffusion for image restoration.
\newblock \emph{arXiv preprint arXiv:2303.11435}, 2023.

\bibitem[Do et~al.(2024)Do, Boscoe, Jones, Li, and Alfaro]{do2024galaxiesml}
Tuan Do, Bernie Boscoe, Evan Jones, Yun~Qi Li, and Kevin Alfaro.
\newblock Galaxiesml: a dataset of galaxy images, photometry, redshifts, and structural parameters for machine learning.
\newblock \emph{arXiv preprint arXiv:2410.00271}, 2024.

\bibitem[He et~al.(2025)He, Sarwal, Qiu, Zhuang, Zhang, Liu, and Chiang]{he2025generative}
Rosemary He, Varuni Sarwal, Xinru Qiu, Yongwen Zhuang, Le~Zhang, Yue Liu, and Jeffrey Chiang.
\newblock Generative ai models in time-varying biomedical data: scoping review.
\newblock \emph{Journal of Medical Internet Research}, 27:\penalty0 e59792, 2025.

\bibitem[He et~al.(2024)He, Zhu, Tavakol, Ye, Lao, Zhu, Xu, Chauhan, Garty, Tomer, et~al.]{he2024squidiff}
Siyu He, Yuefei Zhu, Daniel~Naveed Tavakol, Haotian Ye, Yeh-Hsing Lao, Zixian Zhu, Cong Xu, Sharadha Chauhan, Guy Garty, Raju Tomer, et~al.
\newblock Squidiff: Predicting cellular development and responses to perturbations using a diffusion model.
\newblock \emph{bioRxiv}, pp.\  2024--11, 2024.

\bibitem[Heusel et~al.(2017)Heusel, Ramsauer, Unterthiner, Nessler, and Hochreiter]{heusel2017gans}
Martin Heusel, Hubert Ramsauer, Thomas Unterthiner, Bernhard Nessler, and Sepp Hochreiter.
\newblock Gans trained by a two time-scale update rule converge to a local nash equilibrium.
\newblock \emph{Advances in neural information processing systems}, 30, 2017.

\bibitem[H{\"o}llmer et~al.(2025)H{\"o}llmer, Egg, Martirossyan, Fuemmeler, Shui, Gupta, Prakash, Roitberg, Liu, Karypis, et~al.]{hollmer2025open}
Philipp H{\"o}llmer, Thomas Egg, Maya~M Martirossyan, Eric Fuemmeler, Zeren Shui, Amit Gupta, Pawan Prakash, Adrian Roitberg, Mingjie Liu, George Karypis, et~al.
\newblock Open materials generation with stochastic interpolants.
\newblock \emph{arXiv preprint arXiv:2502.02582}, 2025.

\bibitem[Johnson et~al.(2019)Johnson, Pollard, Berkowitz, Greenbaum, Lungren, Deng, Mark, and Horng]{johnson2019mimic}
Alistair~EW Johnson, Tom~J Pollard, Seth~J Berkowitz, Nathaniel~R Greenbaum, Matthew~P Lungren, Chih-ying Deng, Roger~G Mark, and Steven Horng.
\newblock Mimic-cxr, a de-identified publicly available database of chest radiographs with free-text reports.
\newblock \emph{Scientific data}, 6\penalty0 (1):\penalty0 317, 2019.

\bibitem[Karras et~al.(2022)Karras, Aittala, Aila, and Laine]{karras2022elucidating}
Tero Karras, Miika Aittala, Timo Aila, and Samuli Laine.
\newblock Elucidating the design space of diffusion-based generative models.
\newblock \emph{Advances in neural information processing systems}, 35:\penalty0 26565--26577, 2022.

\bibitem[Kim et~al.(2023)Kim, Kwon, Kim, and Ye]{kim2023unpaired}
Beomsu Kim, Gihyun Kwon, Kwanyoung Kim, and Jong~Chul Ye.
\newblock Unpaired image-to-image translation via neural schrodinger bridge.
\newblock \emph{arXiv preprint arXiv:2305.15086}, 2023.

\bibitem[Ko{\ss}mann et~al.(2022)Ko{\ss}mann, Brack, and Wilhelm]{kossmann2022seasonet}
Dominik Ko{\ss}mann, Viktor Brack, and Thorsten Wilhelm.
\newblock Seasonet: A seasonal scene classification, segmentation and retrieval dataset for satellite imagery over germany.
\newblock In \emph{IGARSS 2022-2022 IEEE International Geoscience and Remote Sensing Symposium}, pp.\  243--246. IEEE, 2022.

\bibitem[Lipman et~al.(2022)Lipman, Chen, Ben-Hamu, Nickel, and Le]{lipman2022flow}
Yaron Lipman, Ricky~TQ Chen, Heli Ben-Hamu, Maximilian Nickel, and Matt Le.
\newblock Flow matching for generative modeling.
\newblock \emph{arXiv preprint arXiv:2210.02747}, 2022.

\bibitem[Liu et~al.(2023)Liu, Vahdat, Huang, Theodorou, Nie, and Anandkumar]{liu20232}
Guan-Horng Liu, Arash Vahdat, De-An Huang, Evangelos~A Theodorou, Weili Nie, and Anima Anandkumar.
\newblock I2sb: Image-to-image schrodinger bridge.
\newblock \emph{arXiv preprint arXiv:2302.05872}, 2023.

\bibitem[Liu et~al.(2022)Liu, Gong, and Liu]{liu2022flow}
Xingchao Liu, Chengyue Gong, and Qiang Liu.
\newblock Flow straight and fast: Learning to generate and transfer data with rectified flow.
\newblock \emph{arXiv preprint arXiv:2209.03003}, 2022.

\bibitem[Ljosa et~al.(2012)Ljosa, Sokolnicki, and Carpenter]{ljosa2012annotated}
Vebjorn Ljosa, Katherine~L Sokolnicki, and Anne~E Carpenter.
\newblock Annotated high-throughput microscopy image sets for validation.
\newblock \emph{Nature methods}, 9\penalty0 (7):\penalty0 637, 2012.

\bibitem[Ma et~al.(2024)Ma, Goldstein, Albergo, Boffi, Vanden-Eijnden, and Xie]{ma2024sit}
Nanye Ma, Mark Goldstein, Michael~S Albergo, Nicholas~M Boffi, Eric Vanden-Eijnden, and Saining Xie.
\newblock Sit: Exploring flow and diffusion-based generative models with scalable interpolant transformers.
\newblock In \emph{European Conference on Computer Vision}, pp.\  23--40. Springer, 2024.

\bibitem[Meng et~al.(2021)Meng, He, Song, Song, Wu, Zhu, and Ermon]{meng2021sdedit}
Chenlin Meng, Yutong He, Yang Song, Jiaming Song, Jiajun Wu, Jun-Yan Zhu, and Stefano Ermon.
\newblock Sdedit: Guided image synthesis and editing with stochastic differential equations.
\newblock \emph{arXiv preprint arXiv:2108.01073}, 2021.

\bibitem[Park et~al.(2020)Park, Efros, Zhang, and Zhu]{park2020contrastive}
Taesung Park, Alexei~A Efros, Richard Zhang, and Jun-Yan Zhu.
\newblock Contrastive learning for unpaired image-to-image translation.
\newblock In \emph{European conference on computer vision}, pp.\  319--345. Springer, 2020.

\bibitem[Rombach et~al.(2021)Rombach, Blattmann, Lorenz, Esser, and Ommer]{rombach2021highresolution}
Robin Rombach, Andreas Blattmann, Dominik Lorenz, Patrick Esser, and Björn Ommer.
\newblock High-resolution image synthesis with latent diffusion models, 2021.

\bibitem[Salimans \& Ho(2022)Salimans and Ho]{salimans2022progressive}
Tim Salimans and Jonathan Ho.
\newblock Progressive distillation for fast sampling of diffusion models.
\newblock \emph{arXiv preprint arXiv:2202.00512}, 2022.

\bibitem[Singh \& Lagun(2025)Singh and Lagun]{singh2025latent}
Saurabh Singh and Dmitry Lagun.
\newblock Latent stochastic interpolants.
\newblock \emph{arXiv preprint arXiv:2506.02276}, 2025.

\bibitem[Su et~al.(2022)Su, Song, Meng, and Ermon]{su2022dual}
Xuan Su, Jiaming Song, Chenlin Meng, and Stefano Ermon.
\newblock Dual diffusion implicit bridges for image-to-image translation.
\newblock \emph{arXiv preprint arXiv:2203.08382}, 2022.

\bibitem[Tong et~al.(2023)Tong, Malkin, Huguet, Zhang, Rector-Brooks, Fatras, Wolf, and Bengio]{tong2023conditional}
Alexander Tong, Nikolay Malkin, Guillaume Huguet, Yanlei Zhang, Jarrid Rector-Brooks, Kilian Fatras, Guy Wolf, and Yoshua Bengio.
\newblock Conditional flow matching: Simulation-free dynamic optimal transport.
\newblock \emph{arXiv preprint arXiv:2302.00482}, 2\penalty0 (3), 2023.

\bibitem[Wu et~al.(2025)Wu, Kong, Sun, Wei, Shan, Feng, Wu, Peng, Zhang, Liu, et~al.]{wu2025flowdesign}
Jun Wu, Xiangzhe Kong, Ningguan Sun, Jing Wei, Sisi Shan, Fuli Feng, Feng Wu, Jian Peng, Linqi Zhang, Yang Liu, et~al.
\newblock Flowdesign: Improved design of antibody cdrs through flow matching and better prior distributions.
\newblock \emph{Cell Systems}, 16\penalty0 (6), 2025.

\bibitem[Zhang et~al.(2025)Zhang, Su, Wang, Li, Wefers, Nirschl, Burgess, Ding, Lozano, Lundberg, et~al.]{zhang2025cellflux}
Yuhui Zhang, Yuchang Su, Chenyu Wang, Tianhong Li, Zoe Wefers, Jeffrey Nirschl, James Burgess, Daisy Ding, Alejandro Lozano, Emma Lundberg, et~al.
\newblock Cellflux: Simulating cellular morphology changes via flow matching.
\newblock \emph{arXiv preprint arXiv:2502.09775}, 2025.

\bibitem[Zhu et~al.(2017)Zhu, Park, Isola, and Efros]{zhu2017unpaired}
Jun-Yan Zhu, Taesung Park, Phillip Isola, and Alexei~A Efros.
\newblock Unpaired image-to-image translation using cycle-consistent adversarial networks.
\newblock In \emph{Proceedings of the IEEE international conference on computer vision}, pp.\  2223--2232, 2017.

\end{thebibliography}
\bibliographystyle{style_bib}

\newpage
\appendix
\section{Algorithm}

\begin{algorithm}[h]
\caption{Training}
\label{alg:training}
\begin{algorithmic}[1]
\Require Training data $\{x_0^\}$ and $\{x_1\}$, epochs $E_\text{pre-train}, E_\text{total}$, interpolant functions $\alpha_t, \beta_t, \gamma_t$, interpolant noise scale $a$, untrained velocity field $v_\theta$ and score field $\eta_\phi$ 
\For{epoch $= 1$ to $E_\text{pre-train}$} \Comment{Stage 1: pre-train on noise-to-target}
    \For{each batch}
        \State Sample $x_0, x_1$, $t \sim \mathcal{U}(0,1)$
        \State $z_\text{two-stage} \sim \mathcal{N}(0, I)$
        \State $z_\text{interpolant} \sim \mathcal{N}(0, I)$
        \State $x_t = \alpha_t z_\text{two-stage} + \beta_t x_1 + \gamma_t z_\text{interpolant}$ \Comment{Perturb the interpolant}
        \State $v_t(x_t|x_0, x_1, z) = \dot{\alpha}_t z_\text{two-stage} + \dot{\beta}_t x_1 + \dot{\gamma}_t z$
        \State $\eta_t^*(x_t|x_0, x_1, z) = z_\text{interpolant}$
        \State Update $\theta, \phi$ on $\mathcal{L}_v(\theta) + \mathcal{L}_\eta(\phi)$
    \EndFor
\EndFor
\For{epoch $E_\text{pre-train}$ to $E_\text{total}$} \Comment{Stage 2: fine-tune on source-to-target}
    \For{each batch}
        \State Sample $x_0, x_1$, $t \sim \mathcal{U}(0,1)$
        \State $z_\text{source} \sim \mathcal{N}(0, I)$
        \State $z_\text{interpolant} \sim \mathcal{N}(0, I)$
        \State $\tilde{x}_0 = x_0 + z_\text{source}$ \Comment{Perturb the source sample}
        \State $x_t = \alpha_t \tilde{x}_0 + \beta_t x_1 + \gamma_t z_\text{interpolant}$ \Comment{Perturb the interpolant}
        \State $v_t(x_t|x_0, x_1, z) = \dot{\alpha}_t \tilde{x}_0 + \dot{\beta}_t x_1 + \dot{\gamma}_t z$
        \State $\eta_t^*(x_t|x_0, x_1, z) = z$
        \State Update $\theta, \phi$ on $\mathcal{L}_v(\theta) + \mathcal{L}_\eta(\phi)$
    \EndFor
\EndFor
\end{algorithmic}
\end{algorithm}

\begin{algorithm}[h]
\caption{Inference}
\label{alg:inference}
\begin{algorithmic}[1]
\Require Source sample $x_0$, learned fields $v_\theta, \eta_\phi$, interpolant noise schedule $\gamma_t$, inference source noise scale $\epsilon$, diffusion coefficient $\sigma_t$, step size $\Delta t$, numerical solver function \verb|Step|
\State $z \sim \mathcal{N}(0, I)$ 
\State $\tilde{x}_0 = x_0 + \epsilon z$ 
\State Initialize $x_t = \tilde{x}_0$ at $t = 0$
\For{$t = 0$ to $1$}
    \State $x_{t+\Delta t} = \verb|Step|\big(x_t, t, v_\theta, \eta_\theta, \gamma_t, \sigma_t\big)$
\EndFor
\end{algorithmic}
\end{algorithm}

\section{Theorems and Derivations}\label{sec:appendix_thm}
To formally state the theorem, we need to first define the population risk and empirical risk for flow matching and our loss. Generally, both losses can be written as
\begin{align}
    \gL_\text{FM}(\theta, t) &= \E_{x_t\sim p_t(x_t)}\left[\frac{1}{2}\lVert v_t^\theta(x_t) - v_t^\text{FM}(x_t) \rVert^2\right]\\
    \gL_\text{SI}(\theta, t) &= \E_{x_t\sim q_t(x_t)}\left[\frac{1}{2}\lVert v_t^\theta(x_t) - v_t^\text{SI}(x_t) \rVert^2\right]
\end{align}
where $v_t^*(x_t)$ is the marginal velocity at $x_t$, and $p_t(x_t),q_t(x_t)$ are the ground-truth distributions to draw $x_t$ for each loss. 
\begin{restatable}[]{theorem}{gengapformal}\label{thm:gengap-formal}
        Let $\gL_\text{FM}(\theta,t),\gL_\text{SI}(\theta,t)$ be the population risk of Flow Matching and our stochastic injection loss at $t\in [0,1]$, and $\hat{\gL}_\text{FM}(\theta,t),\hat{\gL}_\text{SI}(\theta,t)$ be their empirical risks with $n$ i.i.d. samples. Let $p_t(x_t),q_t(x_t)$ be the respective population distribution of $x_t$, and  $\hat{p_t}(x_t),\hat{q_t}(x_t)$ be their empirical distributions, and assume each loss is Lipschitz continuous with regard to $x_t$, we have
    \begin{align}
        \lvert \gL_\text{FM}(\theta,t) - \hat{\gL}_\text{FM}(\theta,t)\rvert \leq L\sW_1(p_t, \hat{p_t}), \quad\quad \lvert \gL_\text{SI}(\theta,t) - \hat{\gL}_\text{SI}(\theta,t)\rvert \leq L\sW_1(q_t, \hat{q_t})
    \end{align}
    for some constant $L$. Moreover, $\sW_1(q_t, \hat{q_t})\leq \sW_1(p_t, \hat{p_t})$.
\end{restatable}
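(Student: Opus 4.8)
The plan is to prove the two claims separately: the generalization‑gap bounds come from Kantorovich--Rubinstein duality applied to the per‑sample loss, and the comparison $\sW_1(q_t,\hat q_t)\le \sW_1(p_t,\hat p_t)$ comes from recognizing $q_t$ as a Gaussian smoothing of $p_t$ together with the fact that convolution with a fixed measure is non‑expansive in $\sW_1$.

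For the gap bounds I would fix $t$ and write the per‑sample loss as $\ell^{\mathrm{FM}}_\theta(x)=\tfrac12\lVert v_t^\theta(x)-v_t^{\mathrm{FM}}(x)\rVert^2$, so that, by the definitions in this appendix, $\gL_\text{FM}(\theta,t)-\hat\gL_\text{FM}(\theta,t)=\E_{p_t}[\ell^{\mathrm{FM}}_\theta]-\E_{\hat p_t}[\ell^{\mathrm{FM}}_\theta]$ --- the target marginal velocity is held fixed, only the law of $x_t$ changes. The stated hypothesis that each loss is $L$‑Lipschitz in $x_t$ then lets me invoke the dual representation $\sW_1(\mu,\nu)=\sup_{\mathrm{Lip}(f)\le1}\bigl(\E_\mu f-\E_\nu f\bigr)$, which gives $\lvert \E_\mu f-\E_\nu f\rvert\le \mathrm{Lip}(f)\,\sW_1(\mu,\nu)$ for every such $f$; taking $f=\ell^{\mathrm{FM}}_\theta$, $\mu=p_t$, $\nu=\hat p_t$ yields the first inequality, and the identical argument with $\ell^{\mathrm{SI}}_\theta$, $q_t$, $\hat q_t$ yields the second, with $L$ the common Lipschitz constant from the hypothesis. (If one prefers the target velocity itself to be the empirical one, a triangle‑inequality step controlling $\lVert v_t^{\mathrm{FM}}-\hat v_t^{\mathrm{FM}}\rVert$ is added; I would present the clean version and note this.)

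For the comparison, the key observation is that the stochastic interpolant $x_t=\alpha_t x_0+\beta_t x_1+\gamma_t z$ with $z\sim\mathcal{N}(0,I_d)$ independent of $(x_0,x_1)$ is exactly $W+\gamma_t z$ where $W:=\alpha_t x_0+\beta_t x_1$ has law $p_t$; hence $q_t=p_t*\mathcal{N}(0,\gamma_t^2 I_d)$. Because the interpolant noise is resampled at every training step while only the $n$ data pairs are finite, the empirical stochastic‑interpolant marginal is likewise $\hat q_t=\hat p_t*\mathcal{N}(0,\gamma_t^2 I_d)$. I would then use the standard non‑expansiveness of convolution under $\sW_1$: taking an optimal $\sW_1$‑coupling $(X,Y)$ of $(p_t,\hat p_t)$ and an independent $Z\sim\mathcal{N}(0,\gamma_t^2 I_d)$, the pair $(X+Z,\,Y+Z)$ is a coupling of $q_t$ and $\hat q_t$ with cost $\E\lVert X-Y\rVert=\sW_1(p_t,\hat p_t)$, so $\sW_1(q_t,\hat q_t)\le\sW_1(p_t,\hat p_t)$.

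The main obstacle is conceptual rather than computational: pinning down the precise sense in which $\hat q_t=\hat p_t*\mathcal{N}(0,\gamma_t^2 I_d)$. This identity, and with it the whole comparison, hinges on the interpolant noise being drawn afresh each step so that the only finite‑sample sparsity lives in the $n$ data pairs; if one instead froze $n$ noise draws the bound would fail pointwise (smoothing would inflate, not shrink, the empirical‑to‑population distance, and one could only hope to recover something in expectation). I would therefore make the formal statement explicit that $\hat p_t$ and $\hat q_t$ are the pushforwards of the empirical data measure under the respective interpolant maps. A secondary point deserving a sentence is that $L$‑Lipschitzness of $\ell_\theta$ on all of $\R^d$ presupposes that $v_t^\theta$ and the marginal velocity are bounded with bounded gradients (or that one restricts to a compact set), which I would fold into the hypotheses.
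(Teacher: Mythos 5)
Your proof is correct and follows essentially the same route as the paper: Kantorovich--Rubinstein duality applied to the per-sample squared-error loss for the two gap bounds, and the identification $q_t = p_t * \mathcal{N}(0,\gamma_t^2 I_d)$, $\hat q_t = \hat p_t * \mathcal{N}(0,\gamma_t^2 I_d)$ followed by non-expansiveness of Gaussian convolution under $\sW_1$ for the comparison. The only differences are cosmetic: you prove the convolution non-expansiveness via an explicit shared-noise coupling where the paper cites it, and you are more careful than the paper in flagging that $\hat q_t = \hat p_t * \mathcal{N}(0,\gamma_t^2 I_d)$ presupposes fresh noise draws rather than $n$ frozen ones --- a worthwhile clarification.
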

\begin{proof}
    Without loss of generality we let $\gG_\theta^\text{FM}(x_t) = \frac{1}{2} \lVert v_t^\theta(x_t) - v_t^\text{FM}(x_t) \rVert^2$ with Lipschitz constant $L_\text{FM}$, and similarly $\gG_\theta^\text{SI}(x_t)$ has Lipschitz constant $L_\text{SI}$, and so 
    \begin{align}
        \gL_\text{FM}(\theta, t) = \E_{x_t\sim p_t(x_t)}\left[\gG_\theta(x_t)\right],\quad  \gL_\text{SI}(\theta, t) = \E_{x_t\sim q_t(x_t)}\left[\gG_\theta(x_t)\right]
    \end{align}
    By Kantorovich–Rubinstein duality of $\sW_1(p_t, \hat{p}_t)$, we have
    \begin{align}
        \left\lvert \gL_\text{FM}(\theta, t) - \hat{\gL}_\text{FM}(\theta, t) \right\rvert &=  \left\lvert \E_{x_t\sim p_t(x_t)}\left[\gG_\theta(x_t)\right] - \E_{x_t\sim \hat{p}_t(x_t)}\left[\gG_\theta(x_t)\right] \right\rvert \\
        &\leq L_\text{FM} \sup_{f\in \gF} \left\lvert \E_{x_t\sim p_t(x_t)}\left[f(x_t)\right] - \E_{x_t\sim \hat{p}_t(x_t)}\left[f(x_t)\right] \right\rvert \\
        &\leq \max\{L_\text{FM} , L_\text{SI}\} \sup_{f\in \gF} \left\lvert \E_{x_t\sim p_t(x_t)}\left[f(x_t)\right] - \E_{x_t\sim \hat{p}_t(x_t)}\left[f(x_t)\right] \right\rvert \\
        &= L\sW_1(p_t, \hat{p}_t)
    \end{align}
    where we let $L=\max  \{L_\text{FM} , L_\text{SI}\}$ and $\gF$ is a function set with Lipschitz constant of at most 1.  The same conclusion holds for $\gL_\text{SI}(\theta, t)$.

    For the final result, since $q_t(x_t)$ is determinstic interpolation (drawn from $p_t(x_t)$) with Gaussian noise, it can be equivalently written as $q_t  =   p_t * N_t $ where $N_t(x) =\gN(0,\sigma_t^2I) $, a Gaussian convolution of $p_t(x_t)$ with variance $\sigma_t^2$. Similarly $\hat{q}_t =   \hat{p}_t * N_t$. Therefore, 
    \begin{align}
        \sW_1(q_t, \hat{q}_t) = \sW_1( p_t*N_t,  \hat{p}_t* N_t) \stackrel{(i)}{\leq} \sW_1(p_t, \hat{p}_t)
    \end{align}
    where (i) is due to Wasserstein-reducing property of Gaussian smoothing~\citep{chen2022asymptotics}. 
\end{proof}
We remark that $\sW_1(p_t, \hat{p}_t)$ is a measure of an upper bound of the generalization gap, and it does not strictly characterize the gap, so the exact relationship between the two generalization gaps cannot be measured precisely. However, we use the $1$-Wasserstein distance as an approximation of the gap to give a rough intuition on why injecting stochastic noise can help test performance.

\discreteprior*

\begin{proof}
    We know that the ground-truth probability-flow ODE path does not cross, and therefore the ground-truth ODE flow is a one-to-one function. Let $\Phi(x_0)$ denote the ground-truth flow path following probability-flow ODE, and consider the pushforward distribution via the flow path as $(\Phi\#p_0)(x_1) = \frac{1}{n}\sum_{i=0}^n\Phi\#\delta(x_0-x_i) = \frac{1}{n}\sum_{i=0}^n \delta(x_1-\Phi(x_i))$, which is a mixture of delta distributions with sample size $n$.
\end{proof}

\section{Datasets}
\label{sec:appendix_datasets}

We describe each dataset below.
\begin{itemize}
\item
\textsc{BBBC}.
We use the BBBC021v1 image set \citep{caie2010high} from the Broad Bioimage Benchmark Collection \citep{ljosa2012annotated}. 
This dataset contains fluorescent microscopy of cells treated with 26 small molecule chemicals,
forming a conditional target distribution for each chemical.
Three color channels correspond to DNA, F-actin, and beta-tubulin markers.
\item
\textsc{SeasoNet}.
This dataset contains multi-spectral aerial image patches covering the surface of Germany from the Sentinel-2 mission, collected from April 2018 to February 2019 \citep{kossmann2022seasonet}. 
The images are available in standard RGB channels and sorted into each of four seasons.
We use only the summer and winter splits for the source and target distributions respectively.
\item
\textsc{Yosemite}.
We use images of Yosemite National Park collected by \citet{zhu2017unpaired} via the Flickr API.
The dataset separates images taken in the summer and images taken in the winter, which we use as the source and target distributions.
\item 
\textsc{MIMIC-CXR}.
MIMIC-CXR is a medical imaging dataset of chest radiographs \citep{johnson2019mimic}. 
We filter to those images with the antero-posterior view angle.
Pleural effusion is a condition characterized by fluid around the lungs. 
The source distribution is defined as scans from patients with pleural effusion value of 0.0, and the target distribution scans from patients with pleural effusion value of 1.0.
The scans are in single-channel grayscale. 
We resize them to $256 \times 256 $.
\item
\textsc{GalaxiesML}.
We use galaxy images from the Hyper-Suprime-Cam (HSC) Survey \citep{aihara2019second} as processed by
\citet{do2024galaxiesml}.
This dataset contains five photometric bands (g, r, i, z, y) as well as spectroscopically confirmed redshifts.
We use the g, r, and i channels to construct a 3-channel image.
The images with redshift values 0.3-0.5 are used as the source distribution and images obtained at redshift values 0.5-0.7 are used as the target distribution.
\end{itemize}

Datasets statistics are summarized in Table~\ref{tab:datasets}. 
All FID numbers are reported over the held out test sets.

\begin{table}[t]
\caption{Dataset statistics.}
\vspace{-1em}
\label{tab:datasets}
\begin{center}
\begin{tabular}{lrrrrcc}
\toprule
& \# train(A) & \# train(B) & \# test(A) & \# test(B) & resolution & domain \\
\midrule
BBBC & 63,781 & 6,210 & 690 & 7,119 & $256 \times 256$ & cell microscopy \\
SeasoNet & 235,826 & 104,432 & 1,024 & 1,024 & $120 \times 120$ & satellite \\
Yosemite & 1,231 & 962 & 309 & 238 & $256 \times 256$ & natural \\
MIMIC-CXR & 16,038 & 44,372 & 1,024 & 1,024 & $256 \times 256 $ & medical x-ray \\
GalaxiesML & 35,725 & 45,741 & 1,024 & 1,024 & $127 \times 127$ & astronomy \\
\bottomrule
\end{tabular}
\end{center}
\end{table}

\section{Training details}

\subsection{Experiments on \textsc{ConcentricShells}}
\label{sec:appendix_training_toy}

In this task, the source distribution (inner shell) is a hypersphere centered at the origin with radius 1 and the target distribution (outer shell) is a hypersphere centered at the origin with radius 2.
For both the source and target distributions, 
each sample is obtained by sampling over the $d$-dimensional shell,
then perturbing this with a random normal noise component with standard deviation 0.1. 

For the data dimension scaling experiment,
each training run operates over 1024 samples from the source distribution (inner shell) and 1024 samples from the target distribution (outer shell).
For the dataset size scaling experiment,
the data dimension is fixed to 512.
We use the Adam optimizer at learning rate 0.01 and batch size 256.
The velocity field is fit by a simple 4-layer MLP with hidden dimension 64 and an ELU non-linearity between each fully connected layer.

All metrics are reported over a test set of 512 samples.
We compute the Sinkhorn distance with entropy regularization of 0.1.

\subsection{Variational autoencoder}

We learn velocity and score fields in the latent space of a VAE for each of the image datasets introduced in Section~\ref{sec:experiments_datasets}.
During training and inference, 
we use different variational autoencoders that is best adapted to each dataset.
They are first trained from the same data as what is available for the distribution learning,
and the VAE weights are subsequently frozen.
We same architecture as the $f=4$ autoencoder from \citet{rombach2021highresolution}.
The VAE for \textsc{bbbc} is trained from scratch.
The VAE for each of \textsc{SeasoNet}, \textsc{MIMIC-CXR}, and \textsc{GalaxiesML} are fine-tuned from \citet{rombach2021highresolution}'s kl-f4 checkpoint,
trained for 176991 steps,
at the default KL regularization penalty of \num{1e-6}. 
For \textsc{Yosemite}, we directly use their pre-trained autoencoder as we found fine-tuning on \textsc{Yosemite} led to overfitting and performance degradation, likely due to the small dataset size.

\subsection{Main experiments}

On all datasets, 
models were trained with constant learning rate \num{1e-4} with the AdamW optimizer (betas 0.9 and 0.95). 
We maintain an EMA-weighted copy of the model with 0.999 decay.
For the conditional dataset \textsc{BBBC}, 
we drop class labels with probability 0.2.
Each epoch iterates over all training data in the target distribution while randomly sampling training data in the source distribution.
We train for 200, 100, 2000, 60, 80 epochs for each of the datasets \textsc{BBBC}, \textsc{SeasoNet}, \textsc{Yosemite}, \textsc{MIMIC-CXR}, and \textsc{GalaxiesML} respectively, 
based on observed convergence.
When training with the two-stage scheme, some fraction of these epochs are reserved for the noise-to-target stage.
Hence the two-stage training does not incur additional compute.

Sampling is performed with the Heun solver
with 50 inference steps (corresponding to 100 NFEs) unless stated otherwise.
The stochastic variant of the Heun solver \citep{karras2022elucidating} was used for the ODE/SDE comparison experiments (Figure~\ref{fig:ode_vs_sde}).
Similar to some prior work \citep{ma2024sit}, 
we chose the diffusion coefficient $\sigma_t^2 / 2 = \sin^2(\pi t)$. 
We also experimented with a time-independent $\sigma_t$ but found it performed worse than a schedule that is tapered at the $t=0$ and $t=1$ endpoints. 
Additionally, we set the diffusion coefficient to 0 within a margin $\epsilon=\num{1e-3}$ near the endpoints, to avoid the numerical instability caused by the factor of $\gamma^{-1}.$ 
We find this is crucial to obtain reasonable samples.

For the DDIB and SDEdit baselines, 
we require access to a generative model that can conditionally flow from noise to both the source and the target.
For full comparability,
we train a noise-to-source/target flow matching model for each dataset, keeping all hyperparameters consistent with the flow baseline where applicable.

\section{More qualitative examples}
\label{sec:appendix_more_examples}

Figure~\ref{fig:examples_more} shows a sample generation from each baseline and from our proposed solution of flow enhanced with stochastic injections. 
Qualitative observations support the results of Table~\ref{tab:fids}.
Our proposed method obtains the highest quality visual samples, which both resemble other images in the target distribution while retaining the visual correspondence with the source.
Of the baselines, DDIB is the strongest,
but still can miss subtle details in distribution matching, for example in \textsc{MIMIC-CXR} showing limited evidence of fluid in the lung cavity.
Samples from UNSB tend to be highly similar to the source image, suggesting its solution collapses to close to the identity map and fails to capture some subtleties of the true transformation. 
In the case of \textsc{GalaxiesML}, UNSB learns some spurious artefacts visible as the haziness in the center of the example. 
SDEdit often produces samples that appear noise corrupted.
Finally, 
our stochastic injections also improve over standard flow matching. 
For instance, it resolves the ``overfitting to snow'' effect in the sample for \textsc{Yosemite}, and better maintains the boundaries of the forested land in the sample for \textsc{SeasoNet}.

\begin{figure}[!tb]
    \centering
    \includegraphics[width=\textwidth]{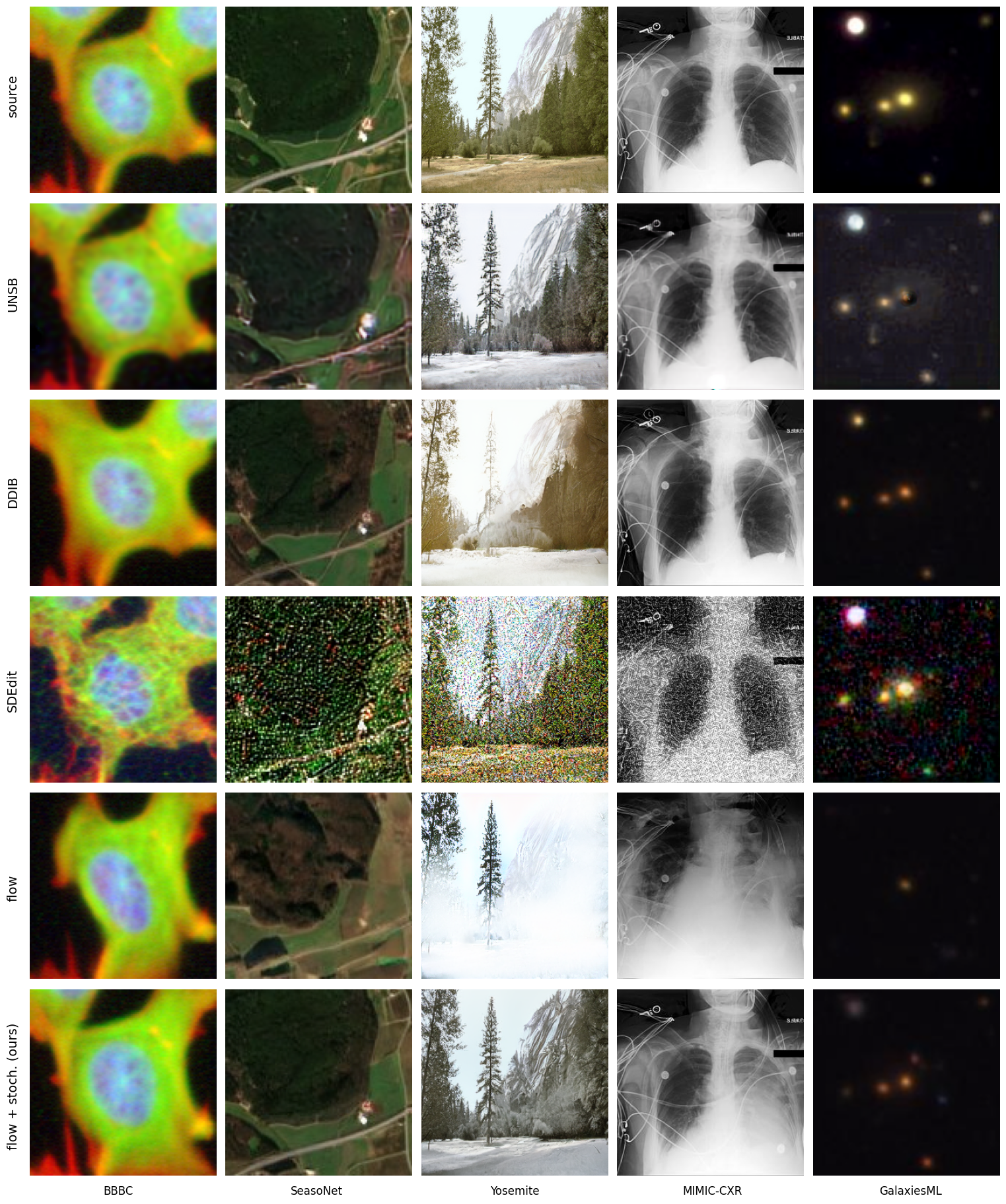}
    \caption{Qualitative examples for each method from Table~\ref{tab:fids}.
    }
    \label{fig:examples_more}
\end{figure}

\end{document}